\newcommand{\bias}{\ensuremath{\mathtt{bias}}}
\newcommand{\error}{\ensuremath{\mathtt{error}}}
\newenvironment{noindentlist2}
 {\begin{list}{\labelitemi}{\leftmargin=0em \itemindent=1em \itemsep=0pt}}
 {\end{list}}
\begin{document}
%\title{Using Worker Quality Scores to Improve Stopping Rules}
\title{How Many Workers to Ask? Adaptive Exploration \\for Collecting High Quality Labels}
\author{Ittai Abraham
\thanks{VMware. Email: {\tt ittaia@gmail.com}.}
\and Omar Alonso
\and Vasilis Kandylas
\and Rajesh Patel
\and Steven Shelford
\and Aleksandrs Slivkins
\thanks{All: Microsoft.
    \{\tt omalonso, vakandyl, rajeshpa, steven.shelford, slivkins\}@microsoft.com.}
}

%\date{May 2014}

%\CopyrightYear{2016}
%\setcopyright{acmlicensed}
%\conferenceinfo{SIGIR '16,}{July 17 - 21, 2016, Pisa, Italy}
%\isbn{978-1-4503-4069-4/16/07}\acmPrice{\$15.00}
%\doi{http://dx.doi.org/10.1145/2911451.2911514}

\maketitle
\begin{abstract}
Crowdsourcing has been part of the IR toolbox as a cheap and fast mechanism to obtain labels for system development and evaluation. Successful deployment of crowdsourcing at scale involves adjusting many variables, a very important one being the number of workers needed per human intelligence task (HIT).
We consider the crowdsourcing task of learning the answer to simple multiple-choice HITs, which are representative of many relevance experiments. In order to provide statistically significant results, one often needs to ask multiple workers to answer the same HIT. A stopping rule is an algorithm that, given a HIT, decides for any given set of worker answers to stop and output an answer or iterate and ask one more worker. In contrast to other solutions that try to estimate worker performance and answer at the same time, our approach assumes the historical performance of a worker is known and tries to estimate the HIT difficulty and answer at the same time. The difficulty of the HIT decides how much weight to give to each worker's answer.
In this paper we investigate how to devise better stopping rules given workers' performance quality scores. We suggest adaptive exploration as a promising approach for scalable and automatic creation of ground truth. We conduct a data analysis on an industrial crowdsourcing platform, and use the observations from this analysis to design new stopping rules that use the workers' quality scores in a non-trivial manner. We then perform a number of experiments using real-world datasets and simulated data, showing that our algorithm performs better than other approaches.

\end{abstract}

%\category{H.3.0}{Information Store and Retrieval}{General}
%\category{I.1.2} {Computing Methodologies}{Algorithms}

%\terms{Design, Experimentation, Measurement }

% Alex: changed for brevity. Must use semi-colons by instructions.
\xhdr{Keywords:} Crowdsourcing; label quality; ground truth; assessments; adaptive algorithms; multi-armed bandits.

%\vspace{1cm}

%%%
\section{Introduction}

Crowdsourcing has become a central tool for improving the quality of search engines and many other large scale on-line services that require high quality assessments or labels. In this usage of crowdsourcing, a task or parts thereof are broadcast to multiple independent, relatively inexpensive workers, and their answers are aggregated.
Automation and optimization of this process at a large scale allows to significantly reduce the costs associated with setting up, running, and analyzing experiments that contain such tasks.

In a typical industrial scenario that we consider in this paper, a \emph{requester} has a collection of HITs, where each HIT has a specific, simple structure and involves only a small amount of work. We focus on multiple-choice HITs, that is, a HIT that contains  a question with several possible answers. The goal of the requester is to learn the preference of the crowd on each of the HITs. For example, if a HIT asks whether a particular URL should be labeled as spam and most workers believe it should, then the requester would like to learn this. This abstract scenario with multiple-choice HITs covers important industrial applications such as relevance assessment and other optimizations for a web search engine and construction of training sets for machine learning algorithms. Obtaining high quality labels is not only important for both model training and development but also for quality evaluation.

%A crowdsourcing system allows to quickly and cheaply obtain and aggregate information from a large on-line community of people. One of the most prominent uses of crowdsourcing  is in learning the \emph{wisdom of the crowd}.

%search engine optimization, relevance assessment, and training set constructions.

The requester has two goals: extract high-quality information from the crowd (i.e., reduce the error rate), and minimize costs (e.g., in terms of money and time spent). There is a tension between these two goals; we will refer to it as the \emph{quality-cost trade-off}.
In practice, it is assumed that there is some noise from the crowd, so the requester defines in advance how many workers are needed per assignment for the whole task. This approach may not
always be the right thing to do. For example, assessing the relevance of the query-URL pair (\texttt{\small facebook}, \texttt{\small www.facebook.com}) should need no more than one or two workers for such popular destination. In contrast, the pair
(\texttt{\small solar storms},  \texttt{\small solarstorms.org}) would require more workers as the topic may not be familiar to some. Using a fixed
number of workers may result in wasting resources for cases that are not needed or in not pooling more answers in assignments that
require more power.
Wouldn't it be useful if there is a flexible mechanism for adjusting the number of workers?

For cost-efficiency, one needs to take into account the heterogeneity in task difficulty and worker skills: some tasks are harder than others, and some workers are more skilled than others.
Further, workers' relative skill level may vary from one task to another.
% may be more skilled at some tasks and less skilled at others.
%\footnote{Also, workers' skill levels may vary over time,
% e.g. as a worker learns or becomes more/less motivated; we do not explicitly address this aspect.}
In general, it is desirable to
% route tasks to the right skilled workers and
(1) use less aggregation for easier tasks, (2) use more skilled workers. The crowdsourcing system initially has a very limited knowledge of task difficulty, and possibly also of worker skills, but both can, in principle, be learned over time.
%\\

A common application that stems from the assessment scenario is the generation of ground truth or gold standard, usually called \emph{gold} HITs or \emph{honey pots}.
These gold HITs are a set of HITs where the associated answers are known in advance. They can be a very effective mechanism to measure the performance of workers and data quality.
% Previous research and our own experience have shown that task designers  who use  gold HITs generally get high quality data on the experiments.
Gold HITs are usually generated manually, typically by hired domain experts. This approach is not scalable: it is expensive, time consuming and error prone. We believe that much more automated systems should be available, whereby a requester starts with a relatively small gold HIT set for bootstrapping, and uses the crowd to generate arbitrarily larger gold HIT sets of high quality. A central challenge in designing a mechanism for automated gold HIT creation is cost-efficient quality control. With error-prone workers, one needs to aggregate the answers of several workers to obtain a statistically robust answer for a gold HIT.

%\ascomment{This entire para seems redundant to me.}
%itt: i prefer to keep it
% AS: and moreover, it now clashes with the index-based algo
\OMIT{ %%%%%
We provide a data analysis of a real crowdsourcing system and design new stopping rule algorithms based on our conclusions. There could be other ways a quality score can be used, but we argue that there is good reason to consider systems that only use them to optimize the stopping rule. In particular, we assume that each worker gets paid the same payment  for each HIT she answers and we assume that the platform cannot choose which workers show up and answer the tasks. These assumptions match the current behavior of many industrial crowdsourcing platforms. Moreover adding differential payments and/or differential task assignment introduces a wide range of complexities. We believe that modifying these assumptions may raise some interesting research questions that are beyond the scope of this paper.
} %%%%%%%

\OMIT{ %%%%%%% not sure what's the point of this para
Gathering relevance labels at scale is a difficult problem that requires an iterative and time consuming process. Our goal is to help the requester by providing an algorithmic solution for collecting high quality labels for any type of multiple-choice tasks.
} %%%%%%%%

%itt: changed wording and make stronger
We make the following contributions: (1) data analysis of HITs from a production  platform, (2) design of two new stopping rule algorithms and (3) automatic generation of ground truth at scale.  We now describe the specifics insights and improvements.

%We break down our contributions in two main areas: data analysis and algorithms. In each, we describe the specifics insights and improvements.

\xhdr{1. Data analysis of a crowdsourcing platform .}
We collected and analyzed a real-world data set from logs of UHRS, a large in-house crowdsourcing platform operated by a comercial search engine. We note that our data set cannot be easily replicated on a publicly accessible crowdsourcing platform such as Amazon Mechanical Turk. Indeed, this is a much larger data set (250,000 total answers) than one could realistically collect via targeted experiments (i.e., without access to platform's logs) because of budget and time limitations. Moreover, using realistic HITs in an open experiment tends to be difficult because of trade secrets.

Analysing the data, we make two empirical observations. First, we find that the difficulty of a random HIT is distributed near-uniformly across a wide range. Second, we investigate the interplay between HIT difficulty and worker quality, and we find that the high-quality workers are significantly better than the low-quality workers for the relatively harder tasks, whereas there is very little difference between all workers for the relatively easy tasks. These observations motivate our algorithms and allow us to construct realistic simulated workloads.

The above observations are based on a large-scale data analysis, which makes them valuable even if they may seem intuitive to one's common sense (albeit perhaps counterintuitive to someone else's).
UHRS, Amazon Mechanical Turk, CrowdFlower, and others have similar architectural characteristics (e.g., HITs, task templates, payment system, etc.) so our data should be comparable to other platforms. Due to the proprietary nature of UHRS, this is the best information that we can share with the community.

\xhdr{2. Adaptive stopping rule algorithms.}
We consider obtaining a high-quality answer for a single HIT. We investigate a natural \emph{adaptive} approach in which the platform adaptively decides how many workers to use before stopping and choosing the answer. The core algorithmic question here is how to design a \emph{stopping rule}: an algorithm that at each round decides whether to stop or to ask one more worker. An obvious quality-cost trade-off is that using more workers naturally increases both costs and quality.
%Since HIT difficulty is near-uniformly distributed across a rather large range,
In view of our empirical observation, we do not optimize for a particular difficulty level, but instead design \emph{robust} algorithms that provide a competitive cost-quality trade-off for the entire range of difficulty.

As a baseline, we consider a scenario where workers are ``anonymous'', in the sense that the stopping rule cannot tell them apart. We design and analyze a simple stopping rule algorithm for this scenario, and optimize its parameters.

%they have no history and therefore no quality scores.

As workers vary in skill and expertise, one can assign quality scores to workers based on their past performance (typically, as measured on gold HITs). We investigate how these quality scores can help in building better stopping rules. While an obvious approach is to assign a fixed ``voting weight'' to each worker depending on the quality score, we find that more nuanced approaches perform even better. Given our empirical observations, we would like to utilize all workers for easy tasks, while giving more weight to better workers on harder tasks. As the task difficulty is not known a priori, we use the stopping time as a proxy: we start out believing that the task is easy, and change the belief in the ``harder'' direction over time as we ask more workers. We design a new adaptive strogging rule algorithm optimized for this setting. We conduct simulations based on the real workload, and conclude that this approach performs better than the ``fixed-weight'' approach.

We focus on the workers' quality scores that are given externally. This is for a practical reason: it is extremely difficult to design the entire crowdsourcing platform as a single algorithm that controls everything. Instead, one is typically forced to design the system in a modular way. In particular, while different requesters may want to have their own stopping rules, the crowdsourcing system may have a separate module that maintains workers' quality scores over different requesters.

\OMIT{ %%%%%%
Accordingly, we start with all workers having equal weights, and gradually modify the weights over time: increase weights for better workers and/or decrease weights for worse workers. We consider several algorithms based on this weight-modifying approach, and compare them to more obvious algorithms that do not change weights over time.
} %%%%%%%%

\xhdr{3. Scalable gold HIT creation.}
Creating gold HITs presents additional challenges compared to the normal HITs. As the quality of the entire application (or
successful experiment) hinges on the correctness of gold HITs, it is feasible and in fact desirable to route gold HITs to more reliable workers on the crowdsourcing platform.
%Note that the other workers would not be starved as they can work on other HITs. However,
Worker quality is typically estimated via performance on the gold HITs that are already present in the system, so the estimates may be very imprecise initially, and gradually improve over time as more gold HITs are added. To find answers for individual HITs in a cost-efficient manner, one can use stopping rules as described above.

We tackle these challenges using ideas from \emph{multi-armed bandits}, a problem space focused on sequentially choosing between a fixed and known set of alternatives with a goal to increase the cumulative reward and/or converge on the best alternative. A multi-armed bandit algorithm needs to trade off  \emph{exploration}, trying out various alternatives in order to gather information probably at the expense of short-term gains, and \emph{exploitation}, choosing alternatives that perform well based on the information collected so far.

We consider a stylized model in which HITs arrive one by one, and the system sequentially assigns workers to a given HIT until it concludes that the answer is known with sufficient confidence. In particular, such system needs to ``explore'' the available workers in order to estimate their quality. We incorporate an insight from multi-armed bandits called \emph{adaptive exploration}: not only the exploitation decisions, but also the exploration schedule itself can be adapted to the data points collected so far (e.g., we can give up early on low-performing alternatives).
To implement adaptive exploration, we take a well-known approach from prior work on multi-armed bandits and tailor it to our setting, connecting it with the stopping rules described above.
Our algorithm performs significantly better than the baseline uniform assignment of workers.

\vspace{1mm}

%\xhdr{Data sanitization.}
For algorithm evaluation we used the UHRS dataset discussed above, and also two previously published data sets from~\cite{Snow08,Ipeirotis10}. Since the UHRS dataset is somewhat sensitive, we have been required to sanitize our results, and in particular we only evaluate on simulated data parameterized by the key properties of that dataset.
% (parameterized by the properties of the collected real-life data).
One advantage of using a simulated workload is that one can replicate our algorithm evaluation (after choosing some values for the first column in Table~\ref{tab:worker-HIT-groups-diff}). Also, we have been able to generate as much simulated data as needed for the experiments, whereas the available number of workers in the original data set was insufficient for some HITs.

%\xhdr{A note on scope.}
%We do not consider the issue of worker availability. While some workers may be more available than others, it is not clear how to incorporate that into algorithm evaluation with our data set. Likewise, handling ``spammers'' is outside of our scope.

% (However, spammer issues at UHRS are different from ones in Amazon Turk, because on UHRS requesters can manage dedicated pools of workers.)}

%We measure each solution via the trade-off between \emph{cost} and \emph{quality}. In particular, we would like to obtain high quality (typically no more that 5\% error relative to a panel of domain experts), while minimizing the costs, in terms of both time and money.
%\xhdr{Organization of the paper.}
The paper is organized as follows. Section~\ref{related_work} summarizes the related work on this area. We describe preliminary
background in Section~\ref{preliminaries}. We provide an analysis using data from an industrial crowdsourcing platform in
Section~\ref{data_analysis}.
The design of a stopping rule for anonymous workers and its evaluation are described in Section~\ref{sec:unweighted}. Similarly,
the case for non-anonymous workers is described in Section~\ref{sec:weighted}. The gold HIT creation method is described in Section~\ref{gold-hit}.
%In Section~\ref{discussion} we list a number of practical recommendations based on our findings.
Finally, conclusions and future work are outlined in Section~\ref{conclusions}.

%\vspace{-0.3cm}
\section{Related work}
\label{related_work}

%For general background on crowdsourcing and human computation, refer to Law and von Ahn~\cite{Law11}.

The use of crowdsourcing as a cheap, fast and reliable mechanism for gathering labels was demonstrated in the areas of natural language processing~\cite{Snow08}, machine translation~\cite{Callison-Burch09} and information retrieval~\cite{AlonsoM12} by
running HITs on Amazon Mechanical Turk or CrowdFlower and comparing the results against an existing ground truth.
While early publications have shown that majority voting is a reasonable approach to achieve good results, new strategies have emerged in the last few years. Jointly with that, several papers consider \emph{task allocation}, the problem of allocating tasks to workers. % (or vice versa).

% Below we discuss the most related papers, mainly pointing out the differences between them and us. Due to the sheer volume of related work, a more thorough review is out of our scope.

Oleson et al.~\cite{Oleson} propose to use the notion of \emph{programmatic gold}, a technique that employs manual spot checking and detection of bad work, in order to reduce the amount of manual work. Ground truth creation is a problem for
new evaluation campaigns when no gold standard is available. Blanco et al.~\cite{Blanco11} rely on manual creation of gold answers for monitoring worker quality in a semantic search task. Scholer et al.~\cite{Scholer11} study the feasibility of
using duplicate documents as ground truth in test collections.

% Omar, do you really want to say "limited"? Why risk pissing someone off if we don't have to?
%While limited, the approach reduces the amount of manual work.

% A.S. It is really not clear what this means, in the context of this section.
% "suggest to use methods they call LU and NLU based on the beta distribution."

% A.S. no need to quote them, it seems that the text above is clear enough.
%\footnote{\cite{Sheng08} say they assume that
% ``individual labeling quality is independent of the specific data point being labeled''}.

Sheng et al.~\cite{Sheng08} design an algorithm that adaptively decides how many labels to use on a given HIT based on the distribution of all previously encountered HITs. Crucially, they assume that all HITs have the same difficulty for a given worker. However, our empirical evidence shows that HITs have widely varying difficulty levels; our algorithms are tailored to deal with this heterogeneity. Also, they optimize the quality of an overall classification objective, rather than the error rate.

Other approaches use the EM algorithm to estimate the workers' accuracy and the final HIT result at the same time \cite{dawid79}. The work presented in \cite{Ipeirotis10} is another algorithm based on EM, with several improvements. EM-based solutions use information from all the HITs in the data set and assume that a worker is answering many (or all) of these HITs and with more or less similar performance across them. Our approach is to consider each HIT individually and without using information from previously answered HITs. Because of this, we do not need to make the assumption that all the HITs are of similar difficulty. Additionally, it is not necessary to have the same workers answer multiple HITs. In fact, each HIT could be answered by a completely new set of workers. In a later section of this paper we make the additional assumption that we have knowledge of the overall quality of the workers, but we still consider that HITs could have varying (and unknown) difficulties.

Vox Populi~\cite{Dekel09} is a data cleaning algorithm that prunes low quality workers with the goal of improving a training set. The technique uses the aggregate label as an approximate ground truth and eliminates the workers that tend to provide incorrect answers.

Karger et al.~\cite{KOS11} optimize task allocation given budgets. Unlike ours, their solution is non-adaptive, in the sense that the task allocation is not adapted to the answers received so far. Further, \cite{KOS11} assume known Bayesian prior on both tasks and judges, whereas we do not.

From a methodology perspective, CrowdSynth~\cite{Kamar12} focuses on addressing consensus tasks by leveraging supervised learning.

Parameswaran et al.~\cite{CrowdScreen-sigmod12} consider a setting similar to our stopping rules for HITs with two possible answers. Unlike us, they assume that all HITs have the same difficulty level, and that the (two-sided) error probabilities are known to the algorithm. They focus designing algorithms for computing an optimal stopping rule.

%Adding a crowdsourcing layer as part of a computation engine is a very recent line of research. An example is CrowdDB, a system for crowdsourcing which includes human computation for processing queries~\cite{Franklin11}. CrowdDB offers basic quality control features, but we expect adoption of more advanced techniques as those systems become more available within the community.

Settings similar to stopping rules for anonymous workers, but incomparable on a technical level, were considered in prior work, e.g. \cite{Bechhofer59}, \cite{Ramey79}, \cite{Bechhofer85}, \cite{Dagum-sicomp00}, \cite{Mnih-icml08}, \cite{BanditSurveys-colt13}.

%\xhdr{Scalable gold HIT creation.}
For scalable gold HIT creation, our model emphasizes explore-exploit trade-off, and as such is related to multi-armed bandits; see \cite{CesaBL-book,Bubeck-survey12} for background on bandits and \cite{Crowdsourcing-PositionPaper13} for a discussion of explore-exploit problems that arise in crowdsourcing markets. Our algorithm builds on a bandit algorithm from Auer et al.~\cite{bandits-ucb1}.

Ho et al.~\cite{Jenn-icml13}, Abraham et al.~\cite{BanditSurveys-colt13} and Chen et al.~\cite{Chen-icml13} consider models for adaptive task assignment with heterogeneity in task difficulty levels and worker skill that are technically different from ours. In~\cite{Jenn-icml13}, the interaction protocol is ``inverted'': workers arrive one by one, and the algorithm sequentially and adaptively assigns tasks to each worker before irrevocably moving on to the next one. The exploration schedule in \cite{Jenn-icml13} is non-adaptive, unlike ours, in the sense that it does not depend on the observations already collected. The solution in~\cite{BanditSurveys-colt13} focuses on a single HIT. The algorithm in~\cite{Chen-icml13} develops an approach based on Bayesian bandits that requires exact knowledge of the experimentation budget and the Bayesian priors.

\section{Preliminaries}
\label{preliminaries}

A HIT is a question with a set of $S$ of possible answers. For each HIT we assume that there exists one answer which is the correct answer (more on this below under ``probabilistic assumptions''). A requester has a collection of HITs, which we call a \emph{workload}. The goal of the requester is to learn what is the correct answer for each HIT. The requester has access to a crowdsourcing system.

We model a stylized crowdsourcing system that operates in rounds. In each round the crowdsourcing systems chooses one HIT from the workload and a worker arrives, receives the HIT, submits her answer and gets paid a fixed amount for her work. The crowdsourcing system needs to output an answer for each HIT in the workload. The algorithm can adaptively decide for each HIT how many workers to ask for answers.
We mostly focus on a single HIT. In each round, a worker arrives and submits an answer to this HIT. The algorithm needs to decide whether to stop (stopping rule) and if so, which answer to choose (selection rule).

There are two measures to be minimized in such an algorithm: (1) the \emph{error rate} for this workload (the percentage of HITs for which the algorithm outputs the wrong answer), and (2) the \emph{average cost} for this workload (the average cost per HIT paid to the workers by the algorithm). Formally this is a bi-criteria optimization problem. If all workers are paid equally, the average cost is simply the average number of rounds.

%\textbf{Probabilistic assumptions.}
\xhdr{Probabilistic Assumptions.}
\label{sec:probab-assumptions}
We model each worker's answer as a random variable over $S$, and assume that these random variables are mutually independent. We assume that the most probable answer is the same for each worker. For the purposes of this paper, the ``correct answer'' is just the most probable answer, and this is the answer that we strive to learn.%
%\footnote{While the most probable answer may actually be false, we do not attempt to learn which answer is \emph{really} correct. Besides, it is not clear how to learn this from workers' responses on a particular HIT.}
The difference between the probability of the most probable answer and the second most probably answer is called the \emph{bias} of a given worker on a given HIT. This quantity, averaged over all workers, is the \emph{bias} of a given HIT.
%For a given task we can define its bias simply as the average probability
%of the most probable answer minus the average probability
%of the second most probable answer (where the average is taken over all workers).
A large bias (close to 1) corresponds to our intuition that the HIT is very easy: the error rate is very small, whereas a small bias (close to 0) implies that the HIT is hard, in the sense that it is very difficult to distinguish between the two most probable options. Here, our notion of easy/hard HITs is objective (reflecting agreement with majority), rather than subjective (reflecting workers' sentiments). Hereafter we use the bias of a HIT as a measure of its hardness.
% Hereafter we define the \emph{hardness} of a task as a direct function of its bias.
In particular, we say that HIT A is \emph{harder} than HIT B if the bias of A is smaller than the bias of B.

%\textbf{Adaptive Exploration}
\OMIT{ %%%%%%
\subsection{Adaptive Exploration}
Any good solution should involve \emph{adaptive HIT assignment}, the assignment of HITs to workers which changes based on previous observations.
This raises a natural trade-off between \emph{exploration} (experimentation to learn more about worker skill and task difficulty) and \emph{exploitation} (making optimal decisions based on the experimentation results available so far). This trade-off occurs in many different scenarios, and is well-studied in Machine Learning and Operations Research (see~\cite{CesaBL-book,Bubeck-survey12,Gittins-book11} for background).

In many settings, the best algorithms for explore-exploit trade-off involve \emph{adaptive exploration}, where not only the exploitation decisions, but the exploration schedule itself is adapted to the previous observations. For example, once we are sufficiently confident that a given alternative is bad, we can give up on it early and focus our exploration budget on more promising alternatives.
} %%%%%%

\section{Data Analysis}
\label{data_analysis}

We performed our analysis using data from UHRS, a large in-house crowdsourcing platform operated by Microsoft. UHRS is used by many different internal groups  for evaluation, label collection, and machine learning applications. The tasks range from TREC-like evaluations to domain specific labeling and experimentation. In particular, UHRS is used to gather training and evaluation data for various aspects of the search engine.

Using the logs of UHRS, we collected a data set from a variety of tasks and workers. In that data set, we selected all tasks that contained at least 50 HITs, and all HITs with at least 50 answers. These HITs have been used for training and/or quality control, which explains an unusually large number of answers per HIT. This large number has been essential for our purposes. We considered all HITs in all these tasks. This gave us a data set containing 20 tasks, 3,000 workers, 2,700 HITs, and 250,000 total answers. For each HIT we computed the majority answer, which we considered as the ``correct'' answer. Details of the different types of HITs, design templates,  and other specific metrics are left out due to proprietary information.

%\subsection{Empirical Biases of HITs}
\xhdr{Empirical Biases of HITs.}
Workers' replies to a given HIT are, at first approximation, IID samples from some fixed distribution $\mD$. A crucial property of $\mD$ is the difference between the top two probabilities, which we call bias of this HIT; note that the bias completely defines $\mD$ if there are only two answers. Informally, larger bias corresponds to easier HIT.
We study the distribution over biases in our workload. For each HIT, we consider the empirical frequencies of answers, and define ``empirical bias'' as the difference between the top two frequencies. We plot the CDF for empirical biases in Figure~\ref{fig:CDF-bias}.

\begin{figure}[h]
	\centering
%	\begin{subfigure}[b]{0.5\textwidth}
%        	\input{chart_gap_distribution_twoOptions.tex}
%                \caption{all HITs with 2 asnwers (136)}
%                \label{fig:CDF-gap-a}
%        \end{subfigure}
        %$\qquad$
        %(or a blank line to force the subfigure onto a new line)
 %       \begin{subfigure}[b]{0.5\textwidth}
		%\tikzstyle{every pin}=[fill=white, draw=black, font=\footnotesize]
\begin{tikzpicture}
\begin{axis}[
    width=8.5cm,
	%title={Aggregate gap distribution},
	%ylabel=Gap,
	%legend pos=south east,
    xmin=0,
    xmax=1,
    ymin=0,
    ymax=1,
	xticklabel={\pgfmathparse{\tick*100}\pgfmathprintnumber{\pgfmathresult}\%},
]
\addplot [color=blue, mark=.] table [x=percent, y=Gap] {chart_gap_distribution_manyOptions.data};
\addplot [color=black] coordinates { (0,0.1847) (0.8624,1) }
[yshift=-25pt] node[pos=0.5] {$R^2 = 0.9402$};
%\addlegendentry{gap}
\end{axis}
\end{tikzpicture}
%              \caption{all HITs, variable \#answers}
%             \label{fig:CDF-gap-b}
%    \end{subfigure}
        \caption{CDF for the empirical bias of HITs.}
        \label{fig:CDF-bias}
\end{figure}
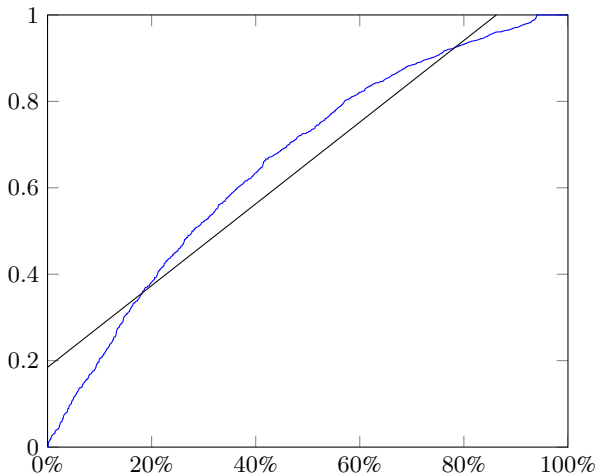

We conclude that HITs have a wide range of biases: some are significantly more difficult than others. In particular, tailoring a decision rule to HITs with a specific narrow range of biases is impractical. Further, we observe that the empirical distribution is, roughly, near-uniform. We use this observation to generate the simulated workload in the next section.

%\subsection{Error Rates}
\xhdr{Error Rates.}
For each worker, we compute the average error rate across all HITs that she answered. According to that, we split all workers into 9 equally sized groups, from best-performing ($W_0$) to worst-performing ($W_8$). Similarly, for each HIT we compute the average error rate across all workers that answered it. %According to that,
We split all HITs into 9~equally-sized groups, from easiest ($H_0$) to most difficult ($H_8$). Let $\error(W_i,H_j)$ be the average error rate of the workers in the worker group $W_i$ when answering the HITs in the HIT group $H_j$.

To make our main finding clearer, and also because our data set is somewhat sensitive, we report a 9-by-8 table (see Table~\ref{tab:worker-HIT-groups-diff}): for each HIT group $H_i$, $i=0\ldots 8$ and each worker group $W_j$, $j = 1\ldots 8$, the corresponding cell contains the difference
\begin{align}
\error(W_i,H_j)-\error(W_0,H_j).
\label{eq:table-cell}
\end{align}
The table is also visualized as a heat map in Figure \ref{fig:heatmap}.

\begin{table}[h]
\begin{center}
\begin{tabular}{c|ccccccccc}
\% & $W_1$&     $W_2$ &     $W_3$ &    $W_4$ &     $W_5$ &     $W_6$ &     $W_7$ &     $W_8$ \\ \hline
$H_0$& 0 &      0 &      0 &      1 &      1 &      1 &      2 &      4 \\
$H_1$& 1 &      1 &      2 &      2 &      3 &      4 &      6 &     15 \\
$H_2$& 1 &      3 &      3 &      4 &      6 &      8 &     11 &     20 \\
$H_3$& 1 &      4 &      4 &      7 &      7 &     11 &     16 &     27 \\
$H_4$& 4 &      7 &      8 &     12 &    13 &     17 &     23 &     36 \\
$H_5$& 5 &      9 &     11 &    14 &     18 &     20 &     26 &     43 \\
$H_6$& 7 &     11 &    15 &    18 &     22 &     25 &     30 &     47 \\
$H_7$& 11 &   14 &    19 &    21 &     25 &     26 &     33 &     48 \\
$H_8$& 19 &   24 &    27 &    29 &     31 &     35 &     39 &     50
\end{tabular}
\end{center}
\caption{Error rates for different worker/HIT groups. The cell $(W_i,H_j)$ contains the difference \eqref{eq:table-cell}, in percent points.}
\label{tab:worker-HIT-groups-diff}
\end{table}

\begin{figure}[h]
	\includegraphics[width=8cm]{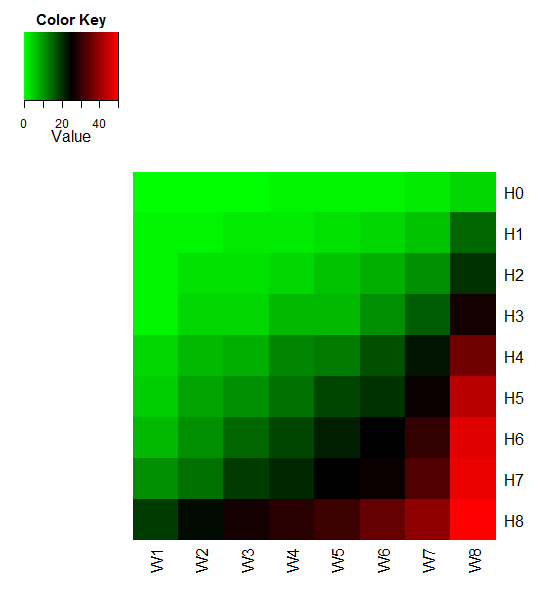}
	\caption{Error rates for different worker/HIT groups.}
	\label{fig:heatmap}
\end{figure}

%\vspace{0.5cm}

%\subsection{Findings}
\xhdr{Findings.}
From Table~\ref{tab:worker-HIT-groups-diff}, we make the following observations.
For difficult tasks ($H_6 \ldots H_8$) the set of good judges ($W_0\ldots W_2$) is significantly better (has a lower error rate) than the set of bad judges ($W_6\ldots W_8$).
For easy tasks ($H_0 \ldots H_2$) there is very little difference between \emph{all judges} (expect perhaps for the very worst judges).
%\begin{itemize}
%\item One is expected: For difficult tasks ($H_6 \ldots H_8$) the set of good judges ($W_0\ldots W_2$) is significantly better (has a lower error rate) than the set of bad judges ($W_6\ldots W_8$).
%
%\item One is somewhat surprising: For easy tasks ($H_0 \ldots H_2$) there is very little difference between \emph{all judges} (expect perhaps for the very worst judges).
%\end{itemize}

These observations are robust to changing the number of HIT and worker groups (from 5 to 9). To summarize, {\em the difference in performance between good and bad workers is much more significant for harder HITs than for easier HITs.} Accordingly, we devise algorithms that tend to use all workers for easier HITs, and favor better workers for more difficult HITs.

\section{Stopping rule for \\anonymous workers}
\label{sec:unweighted}

We start with a simpler case when workers are anonymous, in the sense that there is no prior information on which workers are better than others. Absent such information, we treat all workers equally: essentially, we give each worker's vote the same weight.

\OMIT{
% Alex to Ittai: I think we do not need to state this as an assumption / intuition.
%                 What we say in "discussion" suffices. Pls call me if you disagree.
Further, we rely on an intuition that the requester is willing to tolerate a higher error rate for HITs with a very small bias, in order to improve the error rate vs. average cost trade-off for the entire workload.
}

%\xhdr{Algorithm for anonymous workers.}
\subsection{Algorithm}
For simplicity, let us assume there are only two answers for a HIT: $A$ and $B$. In each round $t$, let $V_{A,t}$ and $V_{B,t}$ be the number of workers that vote for $A$ and $B$, respectively. Note that $t=V_{A,t}+V_{B_t}$. Our stopping rule is as follows:
\begin{align}\label{eq:stopping-rule-unweighted}
\text{Stop if}\;
|V_{A,t} - V_{B,t}| \geq C\sqrt{t} - \eps t.
\end{align}
Here $\eps \geq 0$ and $C \geq 0$ are parameters that need to be chosen in advance. After the algorithm stops, the selection rule is simply to select the most frequent answer.
Note that the right-hand side is not an integer, so we can randomly round it to one of the two closest integers in a way that is proportional to the fractional part.

%\subsubsection{Discussion}
\xhdr{Discussion.}
Our intuition is that each worker's reply is drawn IID from some fixed distribution over answers; recall that the bias of a HIT is the difference between the top two probabilities in this distribution. For two answers:
    $$\bias = |\Pr[A]-\Pr[B]|$$

Informally, the meaning of parameter $\eps$ is that we are willing to tolerate a higher error rate for HITs with $\bias \leq \eps$, in order to improve the error-cost trade-off for the entire workload. We find in our simulations that a small value of $\eps$ performs better than $\eps=0$.

Parameter $C$ controls the error-cost trade-off: increasing it increases the average cost and decreases the error rate.
In practice, the parameters $(C,\eps)$ should be adjusted to typical workloads to obtain the desirable error-cost trade-off.

%\subsubsection{Analysis}
\xhdr{Analysis.}
For the sake of analysis, let us consider a slight modification of algorithm~\eqref{eq:stopping-rule-unweighted} in which parameter $C$ is proportional to $\log t$ (we view this dependence as minor compared to the $\sqrt{t}$ term). 

We prove that our algorithm returns a correct answer with high probability if $\bias \geq \eps$. We consider two hypotheses:
\begin{description}
\item[(H1)] The correct answer is A and $\bias \geq \epsilon$,
\item[(H2)] The correct answer is B and $\bias \geq \epsilon$.
\end{description}
Effectively, if one hypothesis is right, our algorithm rejects the other with high probability.

With $\eps=0$, the expected cost (stopping time) is on the order of $\bias^{-2}$, in line with standard results on biased coin tossing. Using $\eps>0$ relaxes this to $(\eps+\bias)^{-2}$.

\begin{lemma}\label{lm:error-rate}
Fix $\delta\in(0,1)$. Consider the algorithm~\eqref{eq:stopping-rule-unweighted} with parameters $\eps>0$ and
    $C=C_t =  \sqrt{\log (t^2/\delta)}$.
Suppose this algorithm is applied to a HIT with $\bias = \eps_0$.
\begin{description}
\item[(a)] If $\eps_0\geq \eps$ then the algorithm returns a correct answer with probability at least $1-O(\delta)$.
\item[(b)] The expected cost (stopping time) is at most
    $O\left( \rho^{-2}\, \log \tfrac{1}{\delta \rho} \right)$,
where $\rho = \eps+\eps_0$.
\end{description}
\end{lemma}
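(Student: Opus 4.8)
The plan is to model the two-answer case as a biased random walk and to control it with Chernoff bounds plus a union bound over the round number. Assume without loss of generality that $A$ is correct, so each worker independently votes $A$ with probability $p=\tfrac{1+\eps_0}{2}$ and $B$ otherwise; let $S_t=V_{A,t}-V_{B,t}$, a sum of $t$ i.i.d.\ $\pm 1$ increments with $\mathbb{E}[S_t]=\eps_0 t$. Writing $f(t)=C(t)\sqrt t-\eps t$ for the (randomly rounded) threshold, the rule stops at the first $t$ with $|S_t|\ge f(t)$ and outputs $B$ (an error) exactly when it stops with $S_\tau\le -f(\tau)$. I would first record that the algorithm always halts: once $t$ is large enough that $C(t)\sqrt t<\eps t$ the threshold is negative and $|S_t|\ge f(t)$ holds trivially, so $\tau$ is bounded by a deterministic $T_{\max}=O(\eps^{-2}\log\tfrac1{\delta\eps})$; the randomized rounding shifts $f$ by less than $1$ and is immaterial to the asymptotics.

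For part (a) I would bound the error by a union bound over rounds: an error forces $S_t\le -f(t)$ for some $t$, so $\Pr[\text{error}]\le \sum_{t\ge 1}\Pr[S_t\le -f(t)]$. The key simplification uses the hypothesis $\eps_0\ge\eps$: since $f(t)+\eps_0 t=C(t)\sqrt t+(\eps_0-\eps)t\ge C(t)\sqrt t$, the event $\{S_t\le -f(t)\}$ is contained in $\{S_t-\eps_0 t\le -C(t)\sqrt t\}$, which holds regardless of the sign of $f(t)$. A one-sided Chernoff/Hoeffding bound then gives $\Pr[S_t-\eps_0 t\le -C(t)\sqrt t]\le \exp(-\Omega(C(t)^2))$. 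The whole point of the $t^2$ inside $C(t)=\sqrt{\log(t^2/\delta)}$ is to make this per-round failure probability decay like $\delta/t^2$, so that the series $\sum_t\delta/t^2$ converges and the union bound yields $\Pr[\text{error}]=O(\delta)$.

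For part (b) I would show the walk crosses the threshold quickly on the correct side. Define $t^\ast$ as the first round at which the drift dominates the threshold with room to spare, i.e.\ $C(t)\sqrt t\le \tfrac12\rho t$ with $\rho=\eps+\eps_0$; this is the condition $t\ge 4\,C(t)^2/\rho^2=4\log(t^2/\delta)/\rho^2$, and solving this self-referential inequality gives $t^\ast=O(\rho^{-2}\log\tfrac1{\delta\rho})$. For any $t\ge t^\ast$, not having stopped by round $t$ implies $S_t<f(t)$, hence $S_t-\eps_0 t<C(t)\sqrt t-\rho t\le -\tfrac12\rho t$, and Chernoff gives $\Pr[\tau>t]\le \exp(-\Omega(\rho^2 t))$. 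Summing the tail, $\mathbb{E}[\tau]=\sum_{t\ge 0}\Pr[\tau>t]\le t^\ast+\sum_{t\ge t^\ast}\exp(-\Omega(\rho^2 t))\le t^\ast+O(\rho^{-2})$, and since $t^\ast$ already absorbs the $O(\rho^{-2})$ term we obtain $\mathbb{E}[\tau]=O(\rho^{-2}\log\tfrac1{\delta\rho})$.

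The main obstacle, and the step I expect to require the most care, is the summability in (a): the argument only closes if the per-round tail is genuinely $O(\delta/t^2)$, which pins down how much logarithmic slack $C(t)$ must carry and forces attention to the exact constant in the Chernoff exponent (naively one only gets $\sqrt\delta/t$, whose sum diverges). Verifying that $t^\ast$ in (b) really solves $t\gtrsim \rho^{-2}\log(t^2/\delta)$ is a second, milder point, since the doubly-logarithmic correction from substituting the guess back in is absorbed into the $O(\cdot)$.
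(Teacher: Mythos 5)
Your proposal is correct and follows essentially the same route as the paper's sketch: center the vote-difference walk by its drift $\eps_0 t$, show that an erroneous stop at round $t$ forces the centered walk below $-C_t\sqrt{t}$ (using $\eps_0\ge\eps$) and union-bound over $t$ via the $\delta/t^2$ decay built into $C_t$, and for the cost bound observe that once $t\gtrsim (C_t/\rho)^2$ the drift pushes the walk past the threshold with high probability. You add useful detail the paper omits (the tail-sum for $\E[\tau]$ and the observation that the Hoeffding constant must be tracked to get summability $\delta/t^2$ rather than $\sqrt{\delta}/t$), but the decomposition and key events are identical.
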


\begin{proof}
W.l.o.g., suppose $\Pr[A]\geq \Pr[B]$. Consider the difference $Z_t = V_{A,t} - V_{B,t}-\eps_0 t$, where $t$ ranges over rounds. The increments $Z_t-Z_{t-1}$ are independent random variables with mean $0$ and values $\pm 1$, so $Z_t$ is a random walk. Therefore for each $t$:
\begin{align}\label{eq:lm:error-rate-1}
\Pr[ |Z_t| \leq C_t\sqrt{t}] \geq 1-O(\delta/t^2).
\end{align}
by a standard application of the \emph{Azuma-Hoeffding Inequality}. Taking the Union Bound over all $t$, it follows that
\begin{align}\label{eq:lm:error-rate-2}
\Pr\left[ |Z_t| \leq C_t\sqrt{t} \quad\text{for all $t$} \right] \geq 1-O(\delta).
\end{align}

For part (a) assume that hypothesis (H1) holds, i.e. that $\eps_0\geq \eps$, but the algorithm returns an incorrect answer, i.e. stops at some round $t$ so that answer $B$ is chosen. We show this cannot happen if the high-probability event in \eqref{eq:lm:error-rate-2} holds. Indeed, at such round $t$:
\begin{align*}
V_{B,t} - V_{A,t} &> C_t\sqrt{t} - \eps t \\
Z_t  &< (\eps-\eps_0) t-C_t\sqrt{t} < -C_t\sqrt{t}.
\end{align*}
The latter contradicts the high-probability event in \eqref{eq:lm:error-rate-2}.

For part (b), let $T$ be the stopping time. Consider round $t$ such that
$t\geq (2C_t/\rho)^2$. For any such round, the high-probability event $\{Z_t>-C_t \sqrt{t}\}$ implies that
\begin{align*}%\label{eq:lm:error-rate-3}
    V_{A,t} - V_{B,t} > -C_t\sqrt{t}+\eps_0 t \geq C_t\sqrt{t} - \eps t,
\end{align*}
so the algorithm stops at round $t$ or earlier, i.e., $T\leq t$.
By \eqref{eq:lm:error-rate-1}, we conclude that
    $\Pr[T>t] <O(\delta/t^2)$. 
Now, there exists $t_0 = O(\rho^{-2} \log{\frac{1}{\delta \rho}})$ such that 
    $t\geq (2C_t/\rho)^2$
for all $t\geq t_0$. Therefore:
\begin{align*}
\textstyle
\E[T] = \sum_{t=1}^\infty \Pr[T>t] 
    \leq t_0 + \sum_{t>t_0} \delta/t^2
    = t_0 + O(\delta).
\end{align*}
So the expected stopping time $\E[T]$ is as small as we claimed.
\end{proof}

\OMIT{Parameter $\eps$ represents an estimated guarantee on the bias (drift) of the opposite hypothesis that we are rejecting. In a way, if the bias (drift) is smaller than $\eps$ then this allows an incorrect answer that will increase the error rate. }

\newcommand{\curve}{varying-$C$ curve\xspace}
\newcommand{\curves}{varying-$C$ curves\xspace}

%\subsubsection{Extension to multiple answers}
\xhdr{Extension to multiple answers.}
One can extend the stopping rule \eqref{eq:stopping-rule-unweighted} to more than two answers in an obvious way. At time $t$, let $A^*(t)$ and $B^*(t)$ be the answers with the largest and second-largest number of votes, respectively. The stopping rule is
\begin{align}\label{eq:stopping-rule-unweighted-multiple}
\text{Stop if}\;
V_{A^*(t),t} - V_{B^*(t),t} \geq  C\sqrt{t} - \eps t.
\end{align}
The selection rule is to select the most frequent answer.

Lemma~\ref{lm:error-rate} easily carries over to multiple answers. (The proof considers a separate random walk for each pair of answers $A,B$:
$$
    Z_t^{A,B} = V_{A,t} - V_{B,t}- t (\Pr[A]-\Pr[B]),
$$
obtains high-probability event 
    $\{|Z^{A,B}_t| \leq C_t\sqrt{t} \}$
as in \eqref{eq:lm:error-rate-1}, and then conditions on the intersection of all such events.)

%\xhdr{Experimental results.}
\subsection{Experimental Results}
\label{expResultsAnonymous}
We used a simulated workload, consisting of 100,000 HITs, each with two answers. For each HIT, the bias towards the correct answer (the difference between the probabilities of the two answers) was chosen uniformly at random in the interval $[0.1, 0.6]$. This closely matches an empirical distribution of biases, as we have found in the previous experiments. For each worker answering this HIT, the answer was chosen independently at random with the corresponding bias.

For each pair $(\eps,C)$ of parameters, running our algorithm on a single HIT gives a two-fold outcome: the cost and whether the correct answer was chosen. Thus, running our algorithm on all HITs in our workload results in two numbers: average cost and error rate (over all HITs). We plot these pairs of numbers on a coordinate plane where the axes are average cost and error rate. Thus, fixing $\eps$ and varying $C$ we obtain a curve on this plane, which we call the \emph{\curve}.

\begin{figure}[h]
\centering
\begin{tikzpicture}
\begin{axis}[
        width=9cm,
        height=8cm,
        xlabel=Error rate,
        ylabel=Average cost,
        every axis y label/.style={at={(ticklabel cs:0.5)}, rotate=90, anchor=center, yshift=1.5mm},
        legend pos=north east,
        legend cell align=left,
        xmin=0,%0.04,
        xmax=0.17,
        ymin=0,%5,
        xticklabel style={
                /pgf/number format/.cd,
                      fixed,
                fixed zerofill,
                precision=2,
        },
        cycle list name=color list
]
%\addlegendimage{empty legend}
%\addlegendentry{\eps}
\addplot [color=green,dashed,thick] table [x=fixed_overlap_error_rate, y=fixed_overlap_avg_cost] {chart1b.data};
\addlegendentry{fixed overlap majority}
\addplot [color=red,solid] table [x=adap_error_rate_0, y=adap_avg_cost_0] {chart1b.data};
\addlegendentry{adaptive \eps = 0}
\addplot [color=blue,dashed] table [x=adap_error_rate_0.1, y=adap_avg_cost_0.1] {chart1b.data};
\addlegendentry{adaptive \eps = 0.1}
\addplot [color=gray,dotted,thick] table [x=adap_error_rate_0.2, y=adap_avg_cost_0.2] {chart1b.data};
\addlegendentry{adaptive \eps = 0.2}
\addplot [color=orange,densely dotted] table [x=adap_error_rate_0.3, y=adap_avg_cost_0.3] {chart1b.data};
\addlegendentry{adaptive \eps = 0.3}
\addplot [color=magenta,dashdotted] table [x=adap_error_rate_0.5, y=adap_avg_cost_0.5] {chart1b.data};
\addlegendentry{adaptive \eps = 0.5}
\addplot [color=black,loosely dotted,thick] table [x=adap_error_rate_1, y=adap_avg_cost_1] {chart1b.data};
\addlegendentry{adaptive \eps = 1}
\end{axis}
\end{tikzpicture}
\caption{Cost-quality trade-off for fixed overlap majority and the adaptive algorithm on a simulated data set.}
\label{fig:chart1b}
\end{figure}
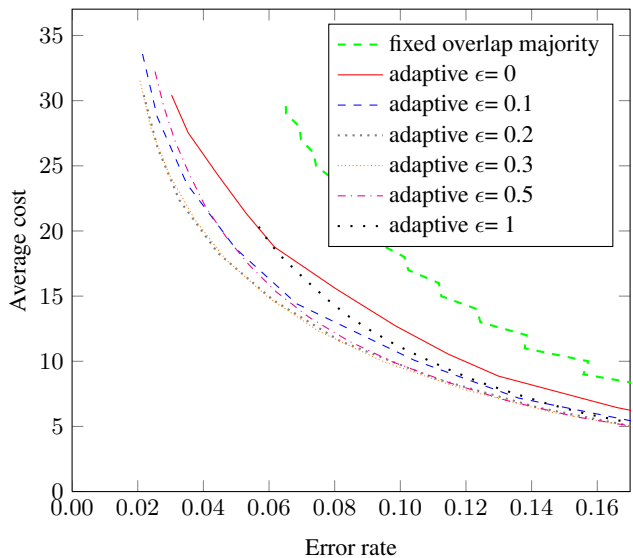

We consider several values for $\eps$, ranging from $0$ to $1$. For each value of $\eps$, we plot the corresponding \curve (Figure~\ref{fig:chart1b}). We also plot, as baseline, the fixed overlap majority algorithm, which uses a fixed number of annotations per HIT (we vary from 1 to 30) and uses simple majority voting (breaking ties randomly). This technique is used in \cite{Snow08}. Surprisingly, we find that, up to some minor noise, for any two \curves it holds that one lies below another. This did not have to be the case, as two curves could criss-cross. If one \curve lies below another \curve, this means that the $\eps$ parameter for the former curve is always better: for any $C$, it gives better average cost for the same error rate. Thus, we find that for any two significantly different values of parameter $\eps$, one value is better than another, regardless of the $C$. From Figure~\ref{fig:chart1b}, we find that the most promising range for $\eps$ is $[.2, .3]$. We have omitted the less interesting values for clarity.
%We zoom in on this range in Figure~\ref{fig:chart2}.

\begin{figure}[h]
\centering
\begin{tikzpicture}
\begin{axis}[
        width=9cm,
        height=8cm,
        xlabel=Error rate,
        ylabel=Average cost,
        every axis y label/.style={at={(ticklabel cs:0.5)}, rotate=90, anchor=center, yshift=1.5mm},
        legend pos=north east,
        legend cell align=left,
        xmin=0,%0.05,
        xmax=0.3,
        ymin=0,%1,
        xticklabel style={
                /pgf/number format/.cd,
                      fixed,
                fixed zerofill,
                precision=2,
        },
        cycle list name=color list
]
%\addlegendimage{empty legend}
%\addlegendentry{\eps}
\addplot [color=green,dashed,thick] table [x=fixed_overlap_error_rate, y=fixed_overlap_avg_cost] {chart1b_RTE.data};
\addlegendentry{fixed overlap majority}
\addplot [color=red,solid] table [x=adap_error_rate_0, y=adap_avg_cost_0] {chart1b_RTE.data};
\addlegendentry{adaptive \eps = 0}
\addplot [color=blue,dashed] table [x=adap_error_rate_0.1, y=adap_avg_cost_0.1] {chart1b_RTE.data};
\addlegendentry{adaptive \eps = 0.1}
\addplot [color=gray,dotted,thick] table [x=adap_error_rate_0.2, y=adap_avg_cost_0.2] {chart1b_RTE.data};
\addlegendentry{adaptive \eps = 0.2}
\addplot [color=orange,densely dotted] table [x=adap_error_rate_0.3, y=adap_avg_cost_0.3] {chart1b_RTE.data};
\addlegendentry{adaptive \eps = 0.3}
\addplot [color=magenta,dashdotted] table [x=adap_error_rate_0.5, y=adap_avg_cost_0.5] {chart1b_RTE.data};
\addlegendentry{adaptive \eps = 0.5}
\addplot [color=black,loosely dotted,thick] table [x=adap_error_rate_1, y=adap_avg_cost_1] {chart1b_RTE.data};
\addlegendentry{adaptive \eps = 1}
\end{axis}
\end{tikzpicture}
\caption{Cost-quality trade-off for fixed overlap majority and the adaptive algorithm on the RTE data set.}
\label{fig:chart1b_RTE}
\end{figure}
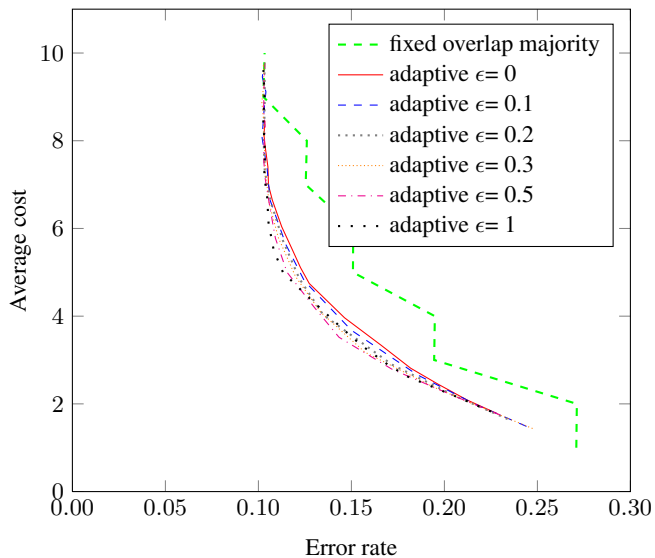

We repeat the same experiment with the NLP RTE data set from \cite{Snow08}. The RTE data set contains 800 HITs and 10 annotations per HIT.
%The experiments were repeated 100 times.
For the fixed overlap majority algorithm we randomly sampled a fixed number of worker labels per HIT (varying the fixed number to produce the curve), whereas for the other algorithms we randomly permuted the order of the worker labels. The results we report are the averages of the error rates and costs over  100 runs (Figure~\ref{fig:chart1b_RTE}). As with the simulated data set, the adaptive algorithm performs better than the fixed overlap majority. Because of the 10 annotations per HIT, it is impossible to do better than the minimum error rate of about 0.11, which is achieved when all available annotations are used. However, the adaptive algorithm can achieve approximately the same error with much lower average cost (about~6 instead of~10).

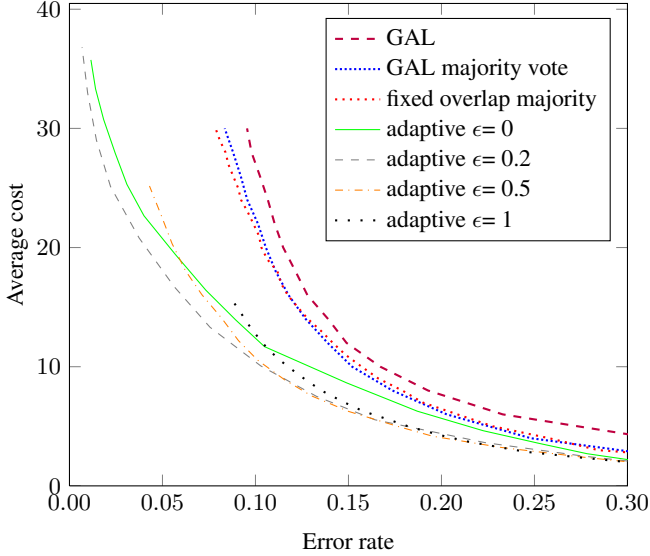
\begin{figure}[h]
\centering
\begin{tikzpicture}
\begin{axis}[
        width=9cm,
        height=8cm,
        xlabel=Error rate,
        ylabel=Average cost,
        every axis y label/.style={at={(ticklabel cs:0.5)}, rotate=90, anchor=center, yshift=1.5mm},
        legend pos=north east,
        legend cell align=left,
        xmin=0,%0.04,
        xmax=0.3,
        ymin=0,%1,
        xticklabel style={
                /pgf/number format/.cd,
                      fixed,
                fixed zerofill,
                precision=2,
        },
        cycle list name=color list
]
%\addlegendimage{empty legend}
%\addlegendentry{\eps}
\addplot [color=purple,dashed,thick] table [x=gal_error_rate, y=gal_avg_cost] {chart1b_EM.data};
\addlegendentry{GAL}
\addplot [color=blue,densely dotted,thick] table [x=gal_pre_error_rate, y=gal_pre_avg_cost] {chart1b_EM.data};
\addlegendentry{GAL majority vote}
\addplot [color=red,dotted, thick] table [x=fixed_overlap_error_rate, y=fixed_overlap_avg_cost] {chart1b_EM.data};
\addlegendentry{fixed overlap majority}
\addplot [color=green,solid] table [x=adap_error_rate_0, y=adap_avg_cost_0] {chart1b_EM.data};
\addlegendentry{adaptive \eps = 0}
%\addplot [color=blue,dashed] table [x=adap_error_rate_0.1, y=adap_avg_cost_0.1] {chart1b_EM.data};
%\addlegendentry{adaptive \eps = 0.1}
\addplot [color=gray,dashed] table [x=adap_error_rate_0.2, y=adap_avg_cost_0.2] {chart1b_EM.data};
\addlegendentry{adaptive \eps = 0.2}
%\addplot [color=orange,densely dotted] table [x=adap_error_rate_0.3, y=adap_avg_cost_0.3] {chart1b_EM.data};
%\addlegendentry{adaptive \eps = 0.3}
\addplot [color=orange,dashdotted] table [x=adap_error_rate_0.5, y=adap_avg_cost_0.5] {chart1b_EM.data};
\addlegendentry{adaptive \eps = 0.5}
\addplot [color=black,loosely dotted,thick] table [x=adap_error_rate_1, y=adap_avg_cost_1] {chart1b_EM.data};
\addlegendentry{adaptive \eps = 1}
\end{axis}
\end{tikzpicture}
\caption{Cost-quality trade-off for fixed overlap majority, Get Another Label (GAL) and the adaptive algorithm on the adult data used
by Get Another Label.}
\label{fig:chart1b_EM}
\end{figure}

Finally, we repeat the same experiment with the adult data set from \cite{Ipeirotis10}. This data set contains classifications of web pages into four categories (G, PG, R, X), depending on the adult content on the page. There are 500 web pages with approximately 100 labels per page. We were unable to obtain from the authors the original gold labels of the web pages that they used for their experiments, therefore we computed the majority answer per web page and treated that as gold.
%All experiments were repeated 100 times.
Using the same adult data set, we also run the EM-based algorithm from \cite{Ipeirotis10}, known as Get Another Label (GAL). The original algorithm also contains a majority voting step that can be used to break ties using label priors (we call GAL with this step GAL majority vote).
For GAL, GAL majority vote  and the fixed overlap majority algorithms we randomly sampled a fixed number of worker labels per HIT (varying the fixed number to produce the curves), whereas for the other algorithms we randomly permuted the order of the worker labels.
The results we report are the averages of the error rates and costs over  100 runs  (Figure~\ref{fig:chart1b_EM}). We have omitted some curves for the adaptive algorithm for clarity. The performance of the adaptive algorithm is better than fixed overlap majority and GAL. This is not too surprising, considering that both GAL and fixed overlap majority use a fixed number of labels for every HIT
so they cannot really decrease the cost by stopping early like adaptive does.
The GAL majority vote algorithm performs comparably to the fixed overlap majority which is expected as they only differ on how they break ties.
Somewhat surprisingly, GAL performed worse than the GAL majority vote. We believe this was caused by the way we generated the ground truth. Since we could not obtain the original ground truth of the adult data set, we used the majority answer as ground truth and this may have impacted the results.

\vspace{-10mm}

%\begin{figure*}[t]
%        \centering
%        \begin{subfigure}[b]{0.5\textwidth}
%                \centering
%               \input{chart1.tex}
%%                \caption{Batch 1: 128 microtasks, 2 options each}
%                \label{fig:CDF-gap-a}
%        \end{subfigure}%
%        %$\qquad$
%          %(or a blank line to force the subfigure onto a new line)
%        \begin{subfigure}[b]{0.5\textwidth}
%                \centering
%                \input{chart2.tex}
% %               \caption{Batch 2: 604 microtasks, variable \#options}
%                \label{fig:CDF-gap-b}
%        \end{subfigure}
%    \caption{Cost vs. error for various $\eps$ values.}
%\end{figure*}

%\begin{figure}[h]
%\centering
%\input{chart1.tex}
%\caption{Cost-quality trade-off for various $\eps$ values.}
%\label{fig:chart1}
%\end{figure}
%
%\begin{figure}[h]
%\centering
%\input{chart2.tex}
%\caption{Cost-quality trade-off: zooming in on the most promising $\eps$ values.}
%\label{fig:chart2}
%\end{figure}

\vspace{1cm}

%\section{A Stopping rule with access to quality scores}
\section{Stopping rule for \\non-anonymous workers}
\label{sec:weighted}

Depending of the task and qualifications required, some workers may be better than others, and one can often estimate who is better by looking at the past performance.
We assume workers have a one-dimensional personal measure of {\em expertise} or skill level, which influences their error rate on HITs. Further, we assume we have access to a {\em reputation system} which can (approximately and coarsely) rank workers by their expertise level. We develop a weighted version of the stopping rule from Section~\ref{sec:unweighted} that is geared to take advantage of such a reputation system. We begin by describing a general weighted stopping rule, then detail how we use it.
\\
%\subsection{Weighted stopping rules}

\subsection{Algorithm}
%\xhdr{Our algorithm.}
In each round $t$, the worker is assigned weight $w_t$. In general, the weights may depend on the available information about the worker and the task. Also, the stopping rule can update the next worker's weight depending on the number $t$ itself. For now, we do not specify \emph{how} the weights are assigned. Absent any prior information on the workers, all weights are 1. Such stopping rules will be called \emph{unweighted}; we have discussed them in Section~\ref{sec:unweighted}.

\OMIT{A particularly simple class \emph{weighted} stopping rule is ones with $w_t\in \{0,1\}$; in other words, each worker $t$ is either given full weight ($w_t=1$), or completely ignored ($w_t=0$); we call it a \emph{0-1 weight} stopping rule.}

Fix some round $t$. The weighted vote $V_{A,t}$ for a given answer $A$ is defined as the total weight of all workers that arrived up to (and including) round $t$ and chose answer $A$. For simplicity, assume there are only two answers: $A$ and $B$. Our stopping rule is as follows:
\begin{align}\label{eq:stopping-rule-weighted}
\text{Stop if}\; \textstyle
|V_{A,t} - V_{B,t}| \geq C\,\sqrt{\sum_{s=1}^t w_s^2} - \eps \sum_{s=1}^t w_s.
\end{align}
Here $C>0$ and $\eps\in[0,1)$ are parameters that need to be chosen in advance. Note that in the unweighted case ($w_t\equiv 1$), this reduces to \refeq{eq:stopping-rule-unweighted}.
Our default selection rule is to choose the answer with the largest weighted vote. We call this the \emph{deterministic} selection rule.

%\ascomment{Do we also want to define some randomized selection rule? how?}

%\subsubsection{Discussion}
\xhdr{Discussion.}
The goal for weighted stopping rule is identical to the unweighted case: among the two hypotheses (H1) and (H2), reject the one that is wrong.

Letting
    $W_{t,q} = (\sum_{s=1}^t w_s^q)^{1/q}$,
we can re-write the stopping rule \eqref{eq:stopping-rule-weighted} more compactly as
\begin{align}\label{eq:stopping-rule-weighted-W}
\text{Stop if}\;
|V_{A,t} - V_{B,t}| \geq C\,W_{t,2} - \eps W_{t,1}.
\end{align}
The meaning of the right-hand side is as follows. $Z_t = V_{A,t} - V_{B,t}$ can be viewed as a biased random walk: its increments $Z_t-Z_{t-1}$ are independent random variables with values $\pm w_t$ and mean $\eps_0=\Pr[A]-\Pr[B]$. The expected drift of this random walk is 
$\E[Z_t] = \eps_0\, W_{t,1}$. Thus, the term $\eps W_{t,1}$ in \eqref{eq:stopping-rule-weighted-W} is a lower bound on the expected drift 
assuming either (H1) or (H2) holds. The meaning of the $C\,W_{t,2}$ term in \eqref{eq:stopping-rule-weighted-W} is that $W_{t,2}$ is the best available upper bound on the standard deviation of $Z_t$.

%\xhdr{Informal analysis.}
%Let's focus on the unweighted case for simplicity. Suppose the stopping rule fires at some time $t$ so that the answer $A$ with the maximal weighted vote is not the correct answer $A^*$. Then
%    $V_{A,t}-V_{A^*,t}> C\sqrt{t} - \eps t$.
%By the Azuma-Hoeffding inequality, this is very low probability event if the bias of each worker is at least $\eps$. More precisely, we have some option $A$ such that for $Z=V_{A,t}-V_{A^*,t}$ it holds that
%    $Z> \E[Z] + C \,\sigma(Z)$;
%the probability of that is at most $e^{-\Omega(C)}$. Furthermore, the threshold at right-hand side of \refeq{eq:stopping-rule-unweighted} is essentially the smallest possible threshold which enables the above argument.
%
%For the weighted case, the same argument applies. Here it is essential that
%    $\E[Z] \geq \sum_{s\leq t} w_s$,
%and
%    $\sigma(Z) \leq \sqrt{\sum_{s=1}^t w_s^2}$.
%
%\xhdr{Discussion.} Parameter $\eps$ represents the amount of ``slack" that we are willing to tolerate: essentially, we allow an incorrect answer if the workers' biases are less than $\eps$. Parameter $C$ controls the trade-off between the expected cost and the error rate: increasing $C$ increases the expected cost and decreases the error rate.

%\subsubsection{Extension to multiple answers}
\xhdr{Extension to multiple answers.}
It is easy to extend the stopping rule \eqref{eq:stopping-rule-weighted} to more than two answers. Let $A$ and $B$ be the answers with the largest and second-largest weighted vote, respectively. The stopping rule is
\begin{align}\label{eq:stopping-rule-weighted-multiple}
\text{Stop if}\;
V_{A,t} - V_{B,t} \geq C\,W_{t,2} - \eps W_{t,1}.
\end{align}

\newcommand{\rep}{\ensuremath{\mathtt{qty}}}
\newcommand{\good}{\ensuremath{\mathtt{good}}}
\newcommand{\average}{\ensuremath{\mathtt{average}}}
\newcommand{\bad}{\ensuremath{\mathtt{bad}}}

%\vspace{1cm}

%\subsubsection{Defining the weights}
\xhdr{Defining the weights.}
We restrict our attention to \emph{coarse} quality scores. This is because a reputation system is likely to be imprecise in practice, especially in relation to a specific HIT. So more fine-grained quality scores, especially continuous ones, are not likely to be meaningful.

Suppose each worker is assigned a coarse quality score $\rep$, e.g.
    $\rep \in \{ \good, \average, \bad \}$.
Our general approach, which we call \emph{reputation-dependent exponentiation}, is as follows. For each possible quality score $\rep$ we have an initial weight $\lambda_\rep$ and the multiplier $\gamma_{\rep}$. If in round $t$ a worker with quality score $\rep$ is asked, then her weight is
    $$w_t = \lambda_\rep\, \gamma_{\rep}^{t-1}$$
A notable special case is \emph{time-invariant weights}: $\gamma_\rep=1$.

The intuition is that we want to gradually increase the weight of the better workers, and gradually decrease the weight of the worse workers. The gradual increase/decrease may be desirable because of the following heuristic argument. As we found empirically (see Table~\ref{tab:worker-HIT-groups-diff}), the difference in performance between good and bad workers is more significant for hard HITs, whereas for very easy HITs all workers tend to perform equally well. Therefore we want to make the difference in \emph{weights} between the good and bad workers to be more significant for harder HITs. While we do not know a priori how difficult a given HIT is, we can estimate its difficulty as we get more answers. One very easy estimate is the number of answers so far: if we asked many workers and still did not stop, this indicates that the HIT is probably hard. Thus, we increase/decrease weights gradually over time.

\subsection{Experimental Results}

%\subsubsection{Simulated workload}

%To make the testing of various algorithms simpler and more streamlined, we first generate a simulated workload form the real workload.

\xhdr{Simulated workload.}
We use the real data 9-by-9 table of error rates for different worker and HIT groups to generate a simulated workload that
consists of 100,000 HITs, all with two answers, and 100 workers that answer all these HITs. We split workers uniformly across worker groups, and split HITs uniformly among HIT groups. For each worker and each HIT, the correct answer is chosen with the probability given by the corresponding cell in the table.
We define a coarse quality score depending on the worker group: the best three worker groups were designated \emph{good}, the middle three \emph{average} and the last three \emph{bad}. This quality score is given as input to the algorithm.%
%\footnote{The algorithm is not given the HIT group, because we believe that in practice the difficulty of a given HIT is essentially not known in advance, whereas the skill level of the workers can often be (coarsely) estimated from their previous performance.}

%\subsubsection{Algorithms tested}
\xhdr{Algorithms tested.}
We tested several ``reputation-dependent exponentiation'' algorithms. Recall that the weights in each such algorithm are defined by the initial weights $\lambda_\rep$ and the multipliers $\gamma_\rep$ for each quality score
        $\rep \in \{ \good, \average, \bad \}$.
For convenience, we denote the initial weights
    $\vec{\lambda} = (\lambda_\good,\, \lambda_\average,\, \lambda_\bad )$
and likewise the multipliers
    $\vec{\gamma} = (\gamma_\good,\, \gamma_\average,\, \gamma_\bad )$.
We experimented with many assignments for $(\vec{\lambda}, \vec{\gamma})$. Below we report on several paradigmatic versions:
%\vspace{-2mm}
\begin{enumerate}
\item No weights (all weights are set to 1, $\vec{\lambda} = (1, 1, 1)$):
    \begin{OneLiners}
    \item \emph{Fixed overlap majority:}  Same as section \ref{expResultsAnonymous}, uses a fixed number of annotations (i.e. overlap) per HIT and computes the simple majority answer. Ties are broken randomly. We vary the overlap parameter to produce the cost-error curve.
    \item \emph{Non-weighted adaptive:} The adaptive algorithm from the previous section, which assumes all the workers are anonymous.
%    \item[(V2)] bad workers ignored: $\vec{\lambda} = (1.2, 1, 0)$.
    \end{OneLiners}
\item Time-invariant weights (multipliers are set to 1, $\vec{\gamma} = (1, 1, 1)$):
    \begin{OneLiners}
\vspace{-2mm}
    \item \emph{Weighted fixed overlap:} Same as the fixed overlap majority algorithm, but the majority is weighted, using weights $\vec{\lambda} = (1.2, 1, 0.8)$.
    \item \emph{Fixed weights adaptive:} Similar to the adaptive algorithm, but uses the weighted stopping rule with weights $\vec{\lambda} = (1.2, 1, 0.8)$. The weights do not change during execution.
%    \item[(V2)] bad workers ignored: $\vec{\lambda} = (1.2, 1, 0)$.
    \end{OneLiners}
\item Time-varying weights. All weights start equal to 1 ($\vec{\lambda} = (1, 1, 1)$), but they increase/decrease by certain multipliers per round. We consider different multipliers $\vec{\gamma}$, so that weights change as follows.
    \begin{OneLiners}
    \item \emph{Reweighted adaptive 5\%:} The weights change by 5\% per round with multipliers $\vec{\gamma} = (1.05, 1, 0.95)$.
    \item \emph{Reweighted adaptive 10\%:} The weights change by 10\% per round with multipliers $\vec{\gamma} = (1.1, 1, 0.9)$.
    \item \emph{Reweighted adaptive 20\%:} The weights change by 20\% per round with multipliers $\vec{\gamma} = (1.2, 1, 0.8)$.
    \end{OneLiners}
%\oacomment{Can we expand how we came up with such weights?}
%\ascomment{Other than what we wrote, the weights are completely arbitrary. "paradigmatic versions", as we call them. :)}
%\item Time-varying weights, ignore bad workers:
%    \begin{OneLiners}
%    \item[(V7)] $\vec{\lambda} = (1, 1, 0)$, multipliers $\vec{\gamma} = (1.1, 1, 0)$.
 %   \end{OneLiners}
\end{enumerate}

%\begin{itemize}
%\item Time-invariant weights ($\gamma_\rep\equiv 1$). We consider different initial weights $\vec{\lambda}$:
%    \begin{OneLiners}
%    \item[(V0)] $\vec{\lambda} = (1, 1, 1)$.
%    \item[(V1)] $\vec{\lambda} = (1.2, 1, 0.8)$.
%    \item[(V2)] bad workers ignored: $\vec{\lambda} = (1.2, 1, 0)$.
%    \end{OneLiners}
%\item Time-varying weights, equal start ($\lambda_\rep \equiv 1$). We consider different multipliers $\vec{\gamma}$, so that weights change ...
%    \begin{OneLiners}
%    \item[(V3)] ... for all workers: $\vec{\gamma} = (1.05, 1, 0.95)$.
%    \item[(V4)] ... for all workers: $\vec{\gamma} = (1.1, 1, 0.9)$.
%    \item[(V5)] ... only for good workers: $\vec{\gamma} = (1.1, 1, 1)$.
%    \item[(V6)] ... only for bad workers: $\vec{\gamma} = (1, 1, 0.9)$.
%    \end{OneLiners}
%\item Time-varying weights, ignore bad workers:
%    \begin{OneLiners}
%    \item[(V7)] $\vec{\lambda} = (1, 1, 0)$, multipliers $\vec{\gamma} = (1.1, 1, 0)$.
%    \end{OneLiners}
%\end{itemize}

\noindent For each assignment of
    $(\vec{\lambda}, \vec{\gamma})$,
we consider several values for the parameter $\eps$, and for each $\eps$ we plotted a \curve in the error rate vs. expected cost plane.
To showcase our findings, some representative choices are shown in Figure~\ref{fig:chart3b}.

%\begin{figure}[t]
%\centering
%\input{chart3.tex}
%\caption{Cost-error trade-off for weighted stopping rules.
%    (For all \curves, $\eps = 0.3$.)}
%\label{fig:chart3}
%\end{figure}

\begin{figure}[t]
\centering
\begin{tikzpicture}
\begin{axis}[
        width=8.7cm,
        height=11cm,
        xlabel=Error rate,
        ylabel=Average cost,
        every axis y label/.style={at={(ticklabel cs:0.5)}, rotate=90, anchor=center, yshift=1.5mm},
        legend pos=north east,
        legend cell align=left,
        xmin=0,
        ymin=0,
        xticklabel style={
                /pgf/number format/.cd,
                      fixed,
                fixed zerofill,
                precision=2,
        },
        cycle list name=color list
]
%\addlegendimage{empty legend}
%\addlegendentry{\eps}
\addplot [color=olive,dotted] table [x=fixed_overlap_errorRate, y=fixed_overlap_avgCost] {chart3b.data};
\addlegendentry{fixed overlap majority}
\addplot [color=red,solid] table [x=weighted_fixed_overlap_errorRate, y=weighted_fixed_overlap_avgCost] {chart3b.data};
\addlegendentry{weighted fixed overlap}
\addplot [color=blue,dashed] table [x=adaptive_nonweighted_errorRate, y=adaptive_nonweighted_avgCost] {chart3b.data};
\addlegendentry{non-weighted adaptive}
\addplot [color=gray,dotted,thick] table [x=adaptive_fixedweights_errorRate, y=adaptive_fixedweights_avgCost] {chart3b.data};
\addlegendentry{fixed weights adaptive}
\addplot [color=orange,densely dotted] table [x=reweighted5pc_errorRate, y=reweighted5pc_avgCost] {chart3b.data};
\addlegendentry{reweighted 5\%}
\addplot [color=magenta,dashdotted] table [x=reweighted10pc_errorRate, y=reweighted10pc_avgCost] {chart3b.data};
\addlegendentry{reweighted 10\%}
\addplot [color=black,loosely dotted,thick] table [x=reweighted20pc_errorRate, y=reweighted20pc_avgCost] {chart3b.data};
\addlegendentry{reweighted 20\%}
\end{axis}
\end{tikzpicture}
\caption{Cost-error trade-off for weighted stopping rules.
    (For all \curves, $\eps = 0.2$.)}
\label{fig:chart3b}
\end{figure}
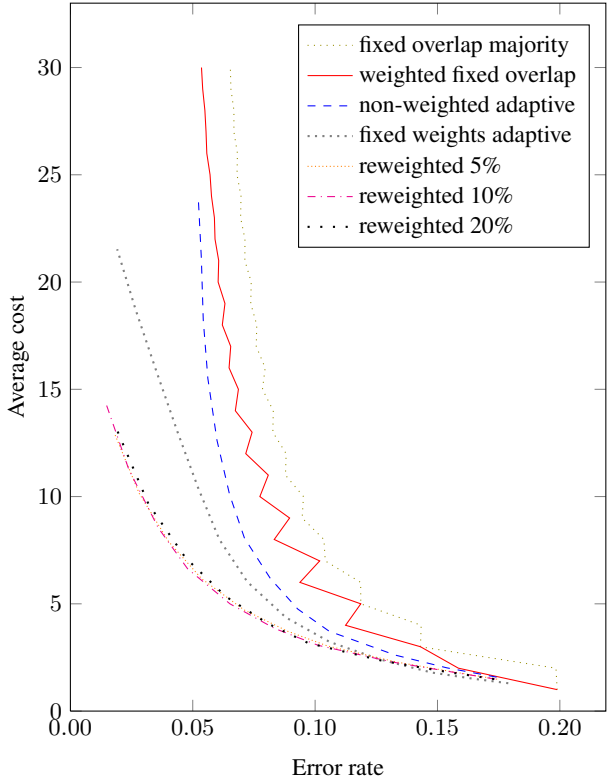

%\subsubsection{Findings}
\xhdr{Our findings.}
As in the previous section, we find that (up to some minor noise) for any two \curves, one lies below another. This enables comparisons between different algorithms that are valid for all choices of parameter $C$.  We conclude: %the following:

\begin{noindentlist2}
\item Using weights is better than not. The weighted fixed overlap algorithm performs better than the non-weighted version. Similarly, the adaptive algorithm performs better with weights (fixed or time-varying) than in the non-weighted (anonymous workers) case.
\item The adaptive algorithm performs better than the fixed overlap majority, even if the adaptive does not use weights, and the fixed overlap does.
\item For the adaptive, it is better to update the weights per round, rather than keeping them fixed.
\item How much the weights get updated does not have a big effect in performance (for the range of 5\% to 20\% that was tested).
%\oacomment{Is over time here in general or should we say that updating every round works best (as we say in the next paragraph?} \ascomment{"in general", at this point.}
%\item It is better to update the weights by a factor a larger factor, rather than a smaller one.
\item It is better to use $\eps>0$. The best value for $\eps$ is usually in the range $[.2, .3]$, and the effect of changing $\eps$ within this range is usually very small. This follows from experiments not shown in figure \ref{fig:chart3b}, but also supported in the experiments of the previous section and Figure~\ref{fig:chart1b}.
\end{noindentlist2}

We also experimented with various other combinations of weight updating schemes and multiplier values, which are not shown in the figure. We tried updating the weights every four rounds rather than every round (updating by $\gamma^4_\rep$, accordingly, for each quality score $\rep$), and we found that updating every round performs better. We also tried updating only the weights of the good workers (or only the weights of the bad workers) and the differences were very small.

Further, we investigated the effect of the magnitude of the multipliers $\vec{\gamma}$. We tried the previously mentioned weighted adaptive algorithm with multipliers that modify the worker weights by 5\%, 10\%, 20\%, 30\%, 40\% and 50\% (for example, $\vec{\gamma} = (1.3, 1, 0.7)$ for 30\% weight updates). We found the differences to be very small, with the updates of 5\% and 10\% to be very slightly better.
%\begin{OneLiners}
%\item $\vec{\gamma} = (1+\delta,1,1-\delta)$, where $\delta = 5\ldots 30\%$.
%\item $\vec{\gamma} = (1+\delta,1,1)$, where $\delta = 10\ldots 30\%$.
%\item $\vec{\gamma} = (1,1,1-\delta)$, where $\delta = 10\ldots 30\%$.
%\end{OneLiners}
%We found that the effect of changing the $\delta$ in this range is very insignificant.
%\oacomment{Can we present/format the above a bit better? If the argument is that the 30\% is stable, then let's just express it better}.
%\ascomment{Not sure, please optimize for clarity ...}

\section{Scalable Gold HIT creation}
\label{gold-hit}

\newcommand{\Index}{\ensuremath{\mathtt{Index}}}

We turn to the problem of scalable gold HIT creation, as described in the Introduction. We consider a stylized model with heterogeneity in worker quality but not in HIT difficulty. The system processes a stream of HITs, possibly in parallel. Each HIT is assigned to workers, sequentially and adaptively, at unit cost per worker, until the gold HIT answer is generated with sufficient confidence or the system gives up. Worker skill levels are initially not known to the algorithm, but can be estimated over time based on past performance. The goal is to minimize the total cost while ensuring low error rate.

% A.S.: no need to assume it here.
% For simplicity, let's assume that each HIT requires only binary answers.

%\subsection{Algorithm}
\xhdr{Algorithm.}
We adopt the following idea from prior work on multi-armed bandits: for each worker, combine exploration and exploitation in a single numerical score, updated over time, and at each decision point choose a worker with the highest current score~\cite{Thompson-1933,Gittins-index-79,bandits-ucb1}. This score, traditionally called an \emph{index}, takes into account both the average skill observed so far (to promote exploitation) and the uncertainty from insufficient sampling (to promote exploration). Over time, the algorithm zooms in on more skilled workers.

%In a typical application workers are given a blended stream of new HITs and Golds HITs that already belong to the Gold HIT set (so workers typically do not know of the current HIT is already part of the Gold HIT set).
% After obtaining enough workers to answer a HIT, we update the her index according to its .

We use a simple algorithm which builds on~\cite{bandits-ucb1,BanditSurveys-colt13}. For each worker $i$, let $t_i$ be the number of performed HITs for which the algorithm has generated a gold HIT answer, and let $t^+_i$ be the number of those HITs where the worker's answer coincides with the gold HIT. If $t_i\geq 1$, we define this worker's index as
\begin{align*}
 \Index_i = \frac{t^+_i}{t_i} + \frac{1}{\sqrt{t_i}}.
\end{align*}

Note that $\Index_i \leq 2$. For initialization, we set $\Index_i=2$.

Now that we've defined $\Index_i$, the algorithm is very simple:
\begin{noindentlist2}
\item At each time step, pick a worker with the highest index, breaking ties arbitrarily.
\item For each HIT, use the unweighted stopping rule \eqref{eq:stopping-rule-unweighted-multiple} to decide whether to stop processing this HIT. Then the gold HIT answer is defined as the majority answer.
\end{noindentlist2}

%\noindent If an exogenous reputation system is available, one can use the weighted stopping rules developed in Section~\ref{sec:weighted}.
%\oacomment{do we need this?}

\newcommand{\Dsucc}{\ensuremath{\mathcal{D}_{\mathtt{qty}}}}

%\subsection{Experimental Results}
\xhdr{Experimental setup.}
To study the empirical performance of our index-based algorithm, we use a simulation parameterized by real data as follows. We focus on HITs with binary answers. We have $1,000$ workers and each worker generates a correct answer for each HIT independently, with some fixed probability (\emph{success rate}) which reflects her skill level. The success rate of each worker is drawn independently from a realistic ``quality distribution'' \Dsucc.

We determined $\Dsucc$ by examining a large set ($>1,500$) of real workers from our internal platform (cf. Section \ref{data_analysis}), and computing their average success rates over
several months. Thus we obtained an empirical quality distribution, which we approximate by a low degree polynomial (see Figure~\ref{fig:quality}).

\begin{figure}[h]
\centering
\begin{tikzpicture}
\begin{axis}[
	width=7cm,
    %title={Worker quality},
    %xlabel=worker index,
    ylabel=Worker success rate,
    legend pos=north east,
		xmin=0,
		xmax=1000,
		ymin=0,
		ymax=1,
		%xticklabel style={
		%	/pgf/number format/.cd,
        %    fixed,
        %    fixed zerofill,
        %    precision=2,
		%},
]
\addplot [color=blue,mark=.] table [x=x, y=y] {chart_worker_pdf.data};
%\addlegendentry{probability being correct}
\end{axis}
\end{tikzpicture}
\caption{Worker quality distribution \Dsucc.}
\label{fig:quality}
\end{figure}

We compare our index-based algorithm to a naive algorithm, called \texttt{Random}, which assigns each HIT to a random worker. Both algorithms use the same unweighted stopping rule \eqref{eq:stopping-rule-unweighted-multiple}. In our simulation, each algorithm processes HITs one by one (but in practice the HITs could be processed in parallel).

Recall that the stopping rule comes with two parameters, $\eps$ and $C$. We consider three different values of $\eps$, namely $\eps=0$, $\eps=0.05$ and $\eps=0.1$. (Recall that according to our simulations in Section~\ref{sec:unweighted}, $[0.05,2]$ is the most promising range for $\eps$.) For each algorithm and each value of $\eps$, we vary the parameter $C$ to obtain different cost vs. quality trade-offs. For each value of $C$, we compute 5K gold HITs using each algorithm. Thus, for each algorithm and each value of $\eps$ we obtain a \curve.
The simulation results are summarized in Figure~\ref{fig:main-experiment}. The main finding is that our index-based algorithm reduces the per-HIT average cost by 35\% to 50\%, compared to \texttt{Random} with the same error rate. Recall that the cost here refers to the number of workers, which in practical terms translates to both time and money. Thus, we suggest adaptive exploration, and particularly index-based algorithms, as a very promising approach for automated gold HIT creation.

\begin{figure}[h]
\centering
\begin{tikzpicture}
\begin{axis}[
	width=8cm,
	height=10cm,
    xlabel=Error rate,
    ylabel=Average cost,
    legend pos=north east,
	legend cell align=left,
    xmin=0.0,
	xmax=0.1,
	ymin=0,
	xticklabel style={
		/pgf/number format/.cd,
		fixed,
		fixed zerofill,
		precision=2,
	},
]
\addplot [color=red,solid,mark=x] table [x=adap_error_eps_0, y=adap_cost_eps_0] {chart_5K_cost_vs_error.data};
\addlegendentry{Random (\eps = 0)}
\addplot [color=red,solid,mark=o] table [x=adap_error_eps_005, y=adap_cost_eps_005] {chart_5K_cost_vs_error.data};
\addlegendentry{Random (\eps = 0.05)}
\addplot [color=red,solid,mark=+] table [x=adap_error_eps_01, y=adap_cost_eps_01] {chart_5K_cost_vs_error.data};
\addlegendentry{Random (\eps = 0.1)}
\addplot [color=blue,mark=x,mark options={style=solid},dashed] table [x=index_error_eps_0, y=index_cost_eps_0] {chart_5K_cost_vs_error.data};
\addlegendentry{Index-based (\eps = 0)}
\addplot [color=blue,mark=o,mark options={style=solid},dashed] table [x=index_error_eps_005, y=index_cost_eps_005] {chart_5K_cost_vs_error.data};
\addlegendentry{Index-based (\eps = 0.05)}
\addplot [color=blue,mark=+,mark options={style=solid},dashed] table [x=index_error_eps_01, y=index_cost_eps_01] {chart_5K_cost_vs_error.data};
\addlegendentry{Index-based (\eps = 0.1)}
\end{axis}
\end{tikzpicture}
\caption{Simulation results.}
\label{fig:main-experiment}
\end{figure}
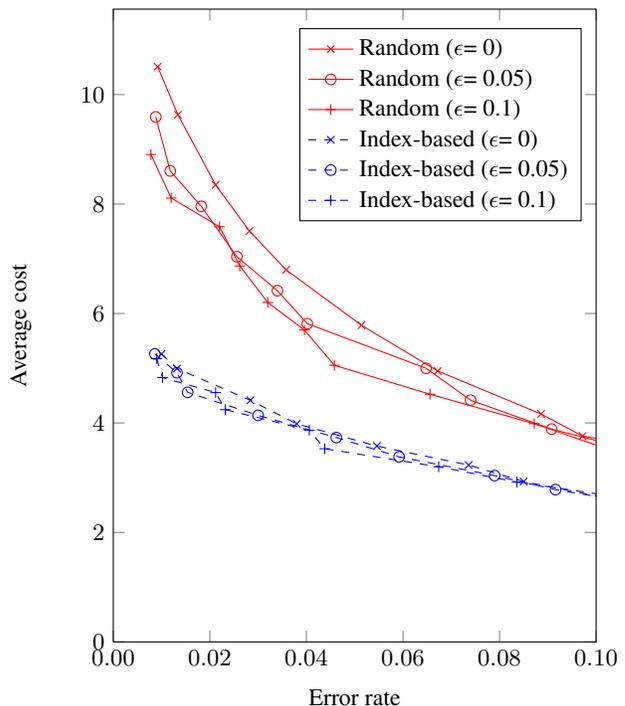

\section{Discussion and Conclusions}
\label{conclusions}

%Information retrieval relies on assessments and label quality for measuring performance quality. 
%As new and different types of media sources become available, the need emerges to build search and content organization infrastructure for them. 
%Gathering high quality labels  becomes an essential part of building search and content organization infrastructure. Ground truth generation and adjusting the number of workers in assessment tasks can thus cover a wide range of applications.

In this paper, we mainly focus on the issue of deciding how many workers to ask for a given HIT. The number of workers asked defines a trade-off between the cost of the HIT and the error rate of the final answer. We propose an adaptive stopping rule which, every time a worker is asked, decides whether to stop or continue asking another worker. The stopping rule takes into account the differences in the workers' answers and the uncertainty from the limited number of these answers. This allows asking few workers for easy HITs, where their answers are mostly identical, and thus incurring low cost. On the other hand, for harder HITs, more workers are asked in order to maintain a low error rate. A simpler scheme that uses a fixed number of workers per HIT wastes answers on the easy HITs and lacks enough answers on the harder HITs.

If workers' skill levels are approximately known from their past performance, we can improve the stopping rule to take the skill levels into account. The difficulty of a new HIT is, as before, assumed to be unknown. From our data analysis we know that all workers tend to perform well on easy HITs, whereas on harder HITs the skill level of the workers tends to make a big difference. We can thus estimate the HIT difficulty by the number of answers when the stopping rule decided to stop. We use this information to re-weight the answers of the workers according to their known skill, so that for harder HITs we rely more on the better workers.
With other EM-based algorithms, the assumption is that we do not know much about the workers and we try to assign a score to them that corresponds to how good they are (e.g., spammers get a low score, whereas workers that are giving correct answers get a high score). The EM-based algorithms try to estimate the worker scores at the same time as estimating the HIT answers. If workers give answers that agree with the estimated ones, then they tend to get high scores. Our adaptive algorithm instead assumes that we know how good or bad the workers are. 
%(e.g. from their prior performance).
 What it tries to estimate  is how easy or hard the HIT is, and what should be the HIT answer. The idea is that for easy HITs, even poorly performing workers are quite reliable
% (by the way, this is why we assume that any spammers have been removed; bad workers are not malicious and can be useful in some easy cases).
 So, if we can figure out that a HIT is easy we can rely on pretty much every worker, whereas if a HIT is hard we should discount the answers of the bad workers. The adaptive algorithm estimates the difficulty of the HIT based on how much the workers agree or disagree and then assigns a weight to each worker's answer that relies on both the estimated difficulty of the HIT and the worker quality.
% (which is known beforehand). 
This also means that the worker weights differ from HIT to HIT.

One can envision an approach where the worker skill is not known beforehand but can be learned algorithmically. For example, after the stopping rule decides to stop and produce a final answer for the HIT, we could compare the worker's answer to the final answer. If their answer matches, we can assume they gave a correct answer. This approach is particularly suitable to the problem of scalable gold HIT creation. However, further research is required to establish if this can produce accurate results in practice or if it leads to ``self-fulfilling loops'' where the workers who are considered skilled provide the same wrong answer. Such answer is then interpreted as the ``correct'' answer by the system, which in turn reinforces the belief that these workers are highly skilled.

While our stopping rules return a single answer for a given HIT, they can be extended to HITs with \emph{several} correct answers. For example, if the vote difference is small between the top two answers, but large between the second and the third answer, then we could stop and output the top two answers as both being correct. With similarly simple modifications, the rules can be expanded to deal with HITs in which the answers correspond to specific numerical values. In that case, it is not only the vote difference that matters but also the difference between the corresponding numerical values. These extensions are the subject of future research.

\xhdr{Acknowledgments.}
We thank Jennifer Wortman Vaughan for providing valuable feedback.

% ############ BIBLIOGRAPHY ##############

%\begin{small}
\bibliographystyle{plain}
\bibliography{bib-abbrv-short,bib-slivkins,bib-bandits,bib-crowdsourcing}
%\end{small}

\end{document}